\documentclass[twocolumn]{article}
\PassOptionsToPackage{dvipsnames,table}{xcolor} 
\usepackage{preprint}
\SetBgContents{} 
\overfullrule=1mm

\usepackage[utf8]{inputenc}
\usepackage[T1]{fontenc}
\usepackage[english]{babel}
\usepackage{csquotes}
\usepackage[final,
            tracking=true,
            kerning=true]{microtype}
\usepackage{xspace}

\usepackage{amssymb}
\usepackage{amsmath}
\usepackage{amsthm}
\usepackage{mathtools}
\usepackage{bbm}
\let\leq\leqslant
\let\geq\geqslant

\newtheorem{example}{Example}
\newtheorem{theorem}{Theorem}

\newtheorem{lemma}{Lemma}

\newtheorem{definition}{Definition}

\usepackage{graphicx}
\usepackage{booktabs}
\graphicspath{{./img/}}

\usepackage{nth}
\usepackage{siunitx}
\sisetup{list-final-separator={, and }}

\usepackage{enumitem}


\newcommand*{\titletext}{How to compare adversarial robustness of classifiers from a global perspective}

\usepackage[backend=biber,
            sorting=nyt,
            maxcitenames=2,
            maxbibnames=8]{biblatex}
\addbibresource{main.bib}
\usepackage{varioref}
\PassOptionsToPackage{hyphens}{url}
\usepackage[hypertexnames=false,
            colorlinks=true,
            linkcolor=MidnightBlue,
            urlcolor=Maroon,
            citecolor=ForestGreen,
            pdfauthor={Jan Philip Göpfert},
            pdftitle={\titletext}]{hyperref}

\usepackage[all]{hypcap}
\usepackage[noabbrev,
            capitalize,
            nameinlink]{cleveref}

\usepackage[backgroundcolor=white,
            linecolor=lightgray,
            bordercolor=white,
            textsize=tiny]{todonotes}

\newcommand*{\eg}{eg.,\@\xspace}
\newcommand*{\ie}{i.\,e.\@\xspace}

\newcommand*{\wrt}{w.\,r.\,t.\@\xspace}
\newcommand*{\R}{\mathbb{R}}

\newcommand*{\1}{\mathbbm{1}}
\DeclarePairedDelimiterX{\norm}[1]{\lVert}{\rVert}{#1}

\DeclareMathOperator{\argmax}{argmax}
\DeclareMathOperator{\sgn}{sgn}

\newcommand*{\X}{\mathcal{X}}  
\newcommand*{\Y}{\mathcal{Y}}  
\renewcommand{\P}{P}  
\newcommand*{\dis}{d}  
\newcommand*{\rob}[2]{R_{#1}^{#2}} 
\newcommand*{\rr}{\mathcal{R}}
\newcommand*{\ddim}{m}

\newcommand*{\tmst}{\texttt{ST}\@\xspace}
\newcommand*{\tmat}{\texttt{AT}\@\xspace}
\newcommand*{\tmkw}{\texttt{KW}\@\xspace}
\newcommand*{\tmma}{\texttt{MMR\,+\,AT}\@\xspace}
\newcommand*{\tmmu}{\texttt{MMR-UNIV}\@\xspace}

\newcommand*{\dsmnist}{\texttt{MNIST}\@\xspace}
\newcommand*{\dsfmnist}{\texttt{FMNIST}\@\xspace}
\newcommand*{\dsgts}{\texttt{GTS}\@\xspace}
\newcommand*{\dscifar}{\texttt{CIFAR-10}\@\xspace}
\newcommand*{\dstinyimg}{\texttt{TINY-IMG}\@\xspace}
\newcommand*{\dshar}{\texttt{HAR}\@\xspace}

\title{\titletext}

\usepackage[auth-sc]{authblk}

\author{Niklas Risse\textsuperscript{*}}
\author{Christina Göpfert\textsuperscript{*}}
\author{Jan Philip Göpfert\textsuperscript{*}}
\affil{Bielefeld University, Germany}

\newcommand\blfootnote[1]{{%
  \let\thempfn\relax%
  \footnotetext[0]{#1}%
}}

\overfullrule=1mm

\begin{document}

\twocolumn[
\begin{@twocolumnfalse}
\maketitle
\begin{abstract}
Adversarial robustness of machine learning models has attracted considerable attention over recent years.
Adversarial attacks undermine the reliability of and trust in machine learning models, but the construction of more robust models hinges on a rigorous understanding of adversarial robustness as a property of a given model.
Point-wise measures for specific threat models are currently the most popular tool for comparing the robustness of classifiers and are used in most recent publications on adversarial robustness.
In this work, we use recently proposed robustness curves to show that point-wise measures fail to capture important global properties that are essential to reliably compare the robustness of different classifiers.
We introduce new ways in which robustness curves can be used to systematically uncover these properties and provide concrete recommendations for researchers and practitioners when assessing and comparing the robustness of trained models.
Furthermore, we characterize scale as a way to distinguish small and large perturbations, and relate it to inherent properties of data sets, demonstrating that robustness thresholds must be chosen accordingly.
We release code to reproduce all experiments presented in this paper, which includes a Python module to calculate robustness curves for arbitrary data sets and classifiers, supporting a number of frameworks, including TensorFlow, PyTorch and JAX.

\end{abstract}
\vspace{0.5cm}
\end{@twocolumnfalse}
]
\blfootnote{\textsuperscript{*}equal contribution}
\section{Introduction} \label{sec: introduction}
Despite their astonishing success in a wide range of classification tasks, deep neural networks can be lead to incorrectly classify inputs altered with specially crafted adversarial perturbations~\autocite{Szegedy2014Intriguing,Goodfellow2014Explaining}.
These perturbations can be so small that they remain almost imperceptible to human observers~\autocite{Gopfert2020AdversarialAttacksHidden}.
Adversarial robustness describes a model's ability to behave correctly under such small perturbations crafted with the intent to mislead the model.
The study of adversarial robustness -- with its definitions, their implications, attacks, and defenses -- has attracted considerable research interest.
This is due to both the practical importance of trustworthy models as well as the intellectual interest in the differences between decisions of machine learning models and our human perception.
A crucial starting point for any such analysis is the definition of what exactly a small input perturbation is -- requiring (a) the choice of a \emph{distance function} to measure perturbation size, and (b) the choice of a particular \emph{scale} to distinguish small and large perturbations.
Together, these two choices determine a \emph{threat model} that defines exactly under which perturbations a model is required to be robust.

The most popular choice of distance function is the class of distances induced by $\ell_p$ norms~\autocite{Szegedy2014Intriguing,Goodfellow2014Explaining,carlini2019evaluating}, in particular $\ell_1, \ell_2$ and $\ell_\infty$, although other choices such as Wasserstein distance have been explored as well~\autocite{wang2019wasserstein}.
Regarding scale, the current default is to pick some perturbation threshold $\varepsilon$ without providing concrete reasons for the exact choice.
Analysis then focuses on the \emph{robust error} of the model, the proportion of test inputs for which the model behaves incorrectly under some perturbation up to size $\varepsilon$.
This means that the scale is defined as a binary distinction between small and large perturbations based on the perturbation threshold.
A set of canonical thresholds have emerged in the literature.
For example, in the publications referenced in this section, the \dsmnist data set is typically evaluated at a perturbation threshold $\varepsilon \in \{0.1, 0.3\}$ for the $\ell_\infty$ norm, while \dscifar is evaluated at $\varepsilon \in \{2/255, 4/255, 8/255\}$, stemming from the three 8-bit color channels used to represent images.

Based on these established threat models, researchers have developed specialized methods to minimize the robust error during training, which results in more robust models.
Popular approaches include specific data augmentation, sometimes used under the umbrella term adversarial training \autocite{guo2017countering,madry2018towards,carmon2019unlabeled,hendrycks2019using}, training under regularization that encourages large margins and smooth decision boundaries in the learned model \autocite{hein2017formal,wong2017provable,Croce2018ProvableRobustness,Croce2020Provable}, and post-hoc processing or randomized smoothing of predictions in a learned model \autocite{lecuyer2019certified,cohen2019certified}.

In order to show the superiority of a new method, robust accuracies of differently trained models are typically compared for a handful of threat models and data sets, \eg $\ell_\infty(\varepsilon = 0.1)$ and $\ell_2(\varepsilon = 0.3)$ for \dsmnist.
Out of \num{22}~publications on adversarial robustness published at NeurIPS~2019, ICLR~2020, and ICML~2020, \num{12}~publications contain results for only a single perturbation threshold.
In five publications, robust errors are calculated for at least two different perturbation thresholds, but still, only an arbitrary number of thresholds is considered.
Only in five out of the total \num{22}~publications do we find extensive considerations of different perturbation thresholds and the respective robust errors.
Out of these five, three are analyses of randomized smoothing, which naturally gives rise to certification radii~\autocite{li_certified_2019,carmon2019unlabeled,pinot_theoretical_2019}.
\Textcite{najafi_robustness_2019} follow a learning-theoretical motivation, which results in an error bound as a function of the perturbation threshold.
Only \textcite{maini_adversarial_2020} do not rely on randomization and still provide a complete, empirical analysis of robust error for varying perturbation thresholds\footnote{\raggedright
\emph{Single thresholds:} \autocite{mao2019metric,tramer_adversarial_2019,alayrac_are_2019,brendel_accurate_2019,qin_adversarial_2019,wang_improving_2020,song_robust_2020,Croce2020Provable,xie_intriguing_2020,rice_overfitting_2020,zhang_attacks_2020,singla_second-order_2020},
\emph{multiple thresholds:} \autocite{lee_tight_2019,mahloujifar_empirically_2019,hendrycks2019using,wong_fast_2020,boopathy_proper_2020},
\emph{full analysis:} \autocite{pinot_theoretical_2019,carmon2019unlabeled,li_certified_2019,najafi_robustness_2019,maini_adversarial_2020}.}.

\emph{Our contributions:}
In this work, we demonstrate that point-wise measures of $\ell_p$ robustness are not sufficient to reliably and meaningfully compare the robustness of different classifiers.
We show that, both in theory and practice, results of model comparisons based on point-wise measures may fail to generalize to threat models with even slightly larger or smaller $\varepsilon$ and that robustness curves avoid this pitfall by design.
Furthermore, we show that point-wise measures are insufficient to meaningfully compare the efficacy of different defense techniques when distance functions are varied, and that robustness curves, again, are able to reliably detect and visualize this property.
Finally, we analyze how scale depends on the underlying data space, choice of distance function, and distribution.
It is our belief that the continued use of single perturbation thresholds in the adversarial robustness literature is due to a lack of awareness of the shortcomings of these measures.
Based on our findings we suggest that robustness curves should become the standard tool when comparing adversarial robustness of classifiers, and that the perturbation threshold of threat models should be selected carefully in order to be meaningful, considering inherent characteristics of the data set.
We release code to reproduce all experiments presented in this paper\footnote{\raggedright The full code is available at \url{https://github.com/niklasrisse/how-to-compare-adversarial-robustness-of-classifiers-from-a-global-perspective}.}, which includes a Python module with an easily accessible interface (similar to Foolbox,~\textcite{rauber2017foolbox}) to calculate robustness curves for arbitrary data sets and classifiers. The module supports classifiers written in most of the popular machine learning frameworks, such as TensorFlow, PyTorch and JAX.

\section{Methods}\label{sec:methods}
An adversarial perturbation for a classifier $f$ and input-output pair $(x, y)$ is a small perturbation $\delta$ with $f(x + \delta) \neq y$.
Because the perturbation $\delta$ is small, it is assumed that the label $y$ would still be the correct prediction for $x + \delta$.
The resulting point $x + \delta$ is called an adversarial example.
The points vulnerable to adversarial perturbations are the points that are either already misclassified when unperturbed, or those that lie close to a decision boundary.

One tool to visualize and study the robustness behavior of a classifier are \emph{robustness curves}~\autocite{Gopfert2020AdversarialRobustnessCurves}.
A robustness curve captures the distribution of shortest distances between a set of points and the decision boundaries of a classifier:

\begin{definition}\label{Definition:robustness_curves}
    Given an input space $\X$ and label set $\Y$, distance function $\dis$ on $\X \times \X$, and classifier $f: \X \to \Y$.
    Assume $(x,y) \sim_{\text{i.i.d.}} \P$ for some distribution $\P$ on $\X \times \Y$.
    Then the $\dis$-\emph{robustness curve} for $f$ is the graph of the function
    \begin{equation*}
    \rob{\dis}{f}(\varepsilon) := \P\left(\{(x,y) \text{ s.t.
    } \exists\; x': \dis(x,x') \leq \varepsilon \wedge f(x') \neq y\}\right)
    \end{equation*}
\end{definition}

Since a model's robustness curve shows how data points are distributed in relation to the decision boundaries of the model, it allows us to take a step back from robustness regarding a specific perturbation threshold, and instead allows us to compare global robustness properties and their dependence on a given classifier, distribution and distance function.
To see why this is relevant, consider \cref{fig:toy}, which shows toy data along with two possible classifiers that perfectly separate the data.
For a perturbation threshold of $\varepsilon$, the blue classifier has robust error $0.5$, while the orange classifier is perfectly robust.
However, for a perturbation threshold of $2 \varepsilon$, the orange classifier has robust error $1$, while the blue classifier remains at $0.5$.
By freely choosing a single perturbation threshold for comparison, it is therefore possible to make either classifier appear to be much better than the other, and no single threshold can capture the whole picture.
In fact, for any two disjoint sets of perturbation thresholds, it is possible to construct a data distribution and two classifiers $f$, $f'$, such that the robust error of $f$ is lower than that of $f'$ for all perturbation thresholds in the first set, and that of $f'$ is lower than that of $f$ for all perturbation thresholds in the second set.
See \cref{app:robustness curve intersections} for a constructive proof. 

\begin{figure*}[tbp]
    \centering
    \raisebox{-0.5\height}{
        \def\svgwidth{3.35835cm}
\begingroup%
  \makeatletter%
  \providecommand\color[2][]{%
    \errmessage{(Inkscape) Color is used for the text in Inkscape, but the package 'color.sty' is not loaded}%
    \renewcommand\color[2][]{}%
  }%
  \providecommand\transparent[1]{%
    \errmessage{(Inkscape) Transparency is used (non-zero) for the text in Inkscape, but the package 'transparent.sty' is not loaded}%
    \renewcommand\transparent[1]{}%
  }%
  \providecommand\rotatebox[2]{#2}%
  \newcommand*\fsize{\dimexpr\f@size pt\relax}%
  \newcommand*\lineheight[1]{\fontsize{\fsize}{#1\fsize}\selectfont}%
  \ifx\svgwidth\undefined%
    \setlength{\unitlength}{85.03937008bp}%
    \ifx\svgscale\undefined%
      \relax%
    \else%
      \setlength{\unitlength}{\unitlength * \real{\svgscale}}%
    \fi%
  \else%
    \setlength{\unitlength}{\svgwidth}%
  \fi%
  \global\let\svgwidth\undefined%
  \global\let\svgscale\undefined%
  \makeatother%
  \begin{picture}(1,1)%
    \lineheight{1}%
    \setlength\tabcolsep{0pt}%
    \put(0,0){\includegraphics[width=\unitlength,page=1]{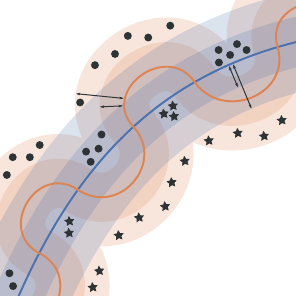}}%
    \put(0.42651749,0.94212508){\color[rgb]{0.18039216,0.20392157,0.21176471}\makebox(0,0)[t]{\smash{\begin{tabular}[t]{c}$2\varepsilon$\end{tabular}}}}%
    \put(0.7260821,0.40981325){\color[rgb]{0.18039216,0.20392157,0.21176471}\makebox(0,0)[t]{\smash{\begin{tabular}[t]{c}$\varepsilon$\end{tabular}}}}%
    \put(0,0){\includegraphics[width=\unitlength,page=2]{fig_toy.pdf}}%
  \end{picture}%
\endgroup%

    }
    \hspace{1cm}
    \raisebox{-0.5\height}{
        \includegraphics{./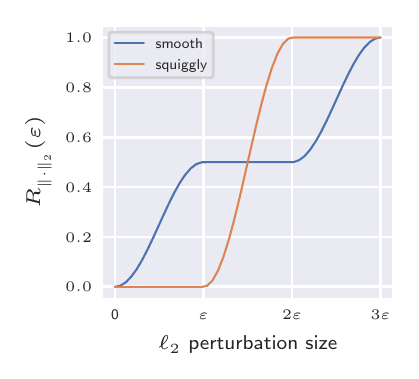}
    }
    \caption{Excerpt of a toy data set with two decision boundaries (left) and
        respective robustness curves (right).
        The data is separated perfectly by one
        smooth boundary (blue robustness curve), and one squiggly boundary (orange
        robustness curve).
        We indicate margins around the boundaries at distances
        \(\varepsilon\) and \(2 \varepsilon\). Selecting a single perturbation threshold is not sufficient to decide which classifier is more robust.}
    \label{fig:toy}
\end{figure*}

\section{Experiments} \label{sec: experiments}
In the following, we empirically evaluate the robustness of a number of recently published models, and demonstrate that the weaknesses of point-wise measures described above are not limited to toy examples, but occur for real-world data and models.

\subsection{Experimental Setup}\label{sec:experimental setup}
We evaluate and compare the robustness of models obtained using the following training methods:
\begin{enumerate}[nosep]
    \item Standard training (\tmst), \ie, training without specific robustness considerations.
    \item Adversarial training (\tmat)~\autocite{madry2018towards}.
    \item Training with robust loss (\tmkw)~\autocite{wong2017provable}.
    \item Maximum margin regularization for a single $\ell_p$ norm together with adversarial training (\tmma)~\autocite{Croce2018ProvableRobustness}.
    \item Maximum margin regularization simultaneously for $\ell_\infty$ and $\ell_1$ margins (\tmmu)~\autocite{Croce2020Provable}.
\end{enumerate}
Together with each training method, we state the threat model the trained model is optimized to defend against, \eg $\ell_\infty(\varepsilon = 0.1)$ for perturbations in $\ell_\infty$ norm with perturbation threshold $\varepsilon = 0.1$, if any.
The trained models are those made publicly available by \textcite{Croce2018ProvableRobustness}\footnote{\raggedright The models trained with \tmst, \tmkw, \tmat and \tmma are avaible at \url{www.github.com/max-andr/provable-robustness-max-linear-regions}.} and \textcite{Croce2020Provable}\footnote{\raggedright The models trained with \tmmu are avaible at \url{www.github.com/fra31/mmr-universal}.}.
The network architecture is a convolutional network with two convolutional layers, two fully connected layers and ReLU activation functions.
The evaluation is based on six real-world datasets: \dsmnist, Fashion-\dsmnist (\dsfmnist)~\autocite{xiao2017fashionmnist}, German Traffic Signs (\dsgts)~\autocite{Houben-IJCNN-2013}, \dscifar~\autocite{Krizhevsky09learningmultiple}, Tiny-Imagenet-200~(\dstinyimg)~\autocite{tinyimagenet}, and Human Activity Recognition~(\dshar) \autocite{hardataset}.
For specifics on model training (hyperparameters, architecture details), refer to \cref{app:experimental details}.
Models are generally trained on the full training set for the corresponding data set, and robustness curves evaluated on the full test set, unless stated otherwise.

For complex models, calculating the exact distance of a point to the closest decision boundary, and thus estimating the true robustness curve, is computationally very intensive, if not intractable.
Therefore we bound the true robustness curve from below using strong adversarial attacks, which is consistent with the literature on empirical evaluation of adversarial robustness and also applicable to many different types of classifiers.
We base our selection of attacks on the recommendations by \textcite{carlini2019evaluating}.
Specifically, we use the $\ell_2$-attack proposed by~\autocite{Carlini2017Towards}  for $\ell_2$ robustness curves and PGD~\autocite{madry2018towards} for $\ell_\infty$ robustness curves.
For both attacks, we use the implementations of Foolbox \autocite{rauber2017foolbox}.
See \cref{app:experimental details} for information on adversarial attack hyperparameters.
In the following, \enquote{robustness curve} refers to this empirical approximation of the true robustness curve.

\subsection{The weaknesses of point-wise measures}\label{sec:weaknesses point-wise measures}
Point-wise measures are used to quantify robustness of classifiers by measuring the robust test error for a specific distance function and a perturbation threshold (\eg $\ell_\infty(\varepsilon = 4/255)$).
In \cref{table:point wise measures} we show three point-wise measures to compare the robustness of five different classifiers on \dscifar.
If we compare the robustness of the four robust training methods (latter four columns of the table) based on the first point-wise threat model $\ell_\infty(\varepsilon = 1/255)$ (first row of the table), we can see that the classifier trained with \tmat is the most robust, followed by \tmma, followed by \tmkw, and \tmmu results in the least robust classifier.
However, if we increase the $\varepsilon$ of our threat model to $\varepsilon = 4/255$ (second row of the table), \tmkw is more robust than \tmat.
For a even larger $\varepsilon$ (third row of the table), we would conclude that \tmmu is preferable over \tmat, and that \tmat results in the least robust classifier.
All three statements are true for the particular perturbation threshold ($\varepsilon$), and the magnitude of all perturbation thresholds is reasonable: publications on adversarial robustness typically evaluate \dscifar on perturbation thresholds $\leq 10/255$ for $\ell_\infty$ perturbations.
Meaningful conclusions on the robustness of the classifiers relative to each other can not be made without taking all possible $\varepsilon$ into account.
In other words, a global perspective is needed.

\begin{table*}[bt]
    \centering
    \caption{Three point-wise measures for different threat models.
        All threat models use the $\ell_\infty$ distance function, but differ in choice of perturbation threshold (denoted by $\varepsilon$).
        Each row contains the robust test errors for one point-wise measure.
        Each column contains the robust test errors for one model, trained with a specific training method (marked by
        column title).
        The lower the number, the better the robustness for the specific threat model. Each point-wise measure results in a different relative ordering of the classifiers based on the errors.
        The  order is visualized by different tones of gray in the background of the cells. \newline}
    \resizebox{
        \ifdim\width>\linewidth
        \linewidth
        \else
        \width
        \fi
    }{!}{
        \small
        \begin{tabular}{cS[table-format=2.2]S[table-format=2.2]S[table-format=2.2]S[table-format=2.2]S[table-format=2.2]S[table-format=2.2]}
            \toprule   
            $\varepsilon$ & {\tmst}                  & {\tmat}                  & {\tmkw}                  &  {\tmma}                 & {\tmmu}                  \\
            \midrule
            1/255         & \cellcolor{gray!10} 0.60 & \cellcolor{gray!50} 0.38 & \cellcolor{gray!30} 0.43 & \cellcolor{gray!40} 0.42 & \cellcolor{gray!20} 0.54 \\
            4/255         & \cellcolor{gray!10} 0.99 & \cellcolor{gray!30} 0.68 & \cellcolor{gray!50} 0.57 & \cellcolor{gray!40} 0.63 & \cellcolor{gray!20} 0.74 \\
            8/255         & \cellcolor{gray!10} 1.00 & \cellcolor{gray!20} 0.92 & \cellcolor{gray!50} 0.73 & \cellcolor{gray!40} 0.84 & \cellcolor{gray!30} 0.91 \\
            \bottomrule
        \end{tabular}
    }
    \label{table:point wise measures}
\end{table*}

\subsubsection{A global perspective}

\Cref{fig:rc_crossings} shows the robustness of different classifiers for the $\ell_\infty$ (right plot) and $\ell_2$ (left plot) distance functions from a global perspective using robustness curves.
The plot reveals why the three point-wise measures (marked by vertical black dashed lines in the left plot) lead to different results in the relative ranking of robustness of the classifiers.
Both for the classifiers trained to be robust against attacks in $\ell_\infty$ distance (left plot) and $\ell_2$ distance (right plot), we can observe multiple intersections of robustness curves, corresponding to changes in the relative ranking of the robustness of the compared classifiers.
The robustness curves allow us to reliably compare the robustness of classifiers for all possible perturbation thresholds. Furthermore, the curves clearly show the perturbation threshold intervals with strong and weak robustness for each classifier, and are not biased by an arbitrarily chosen perturbation threshold.

\begin{figure*}[tbp]
    \centering
    \includegraphics{./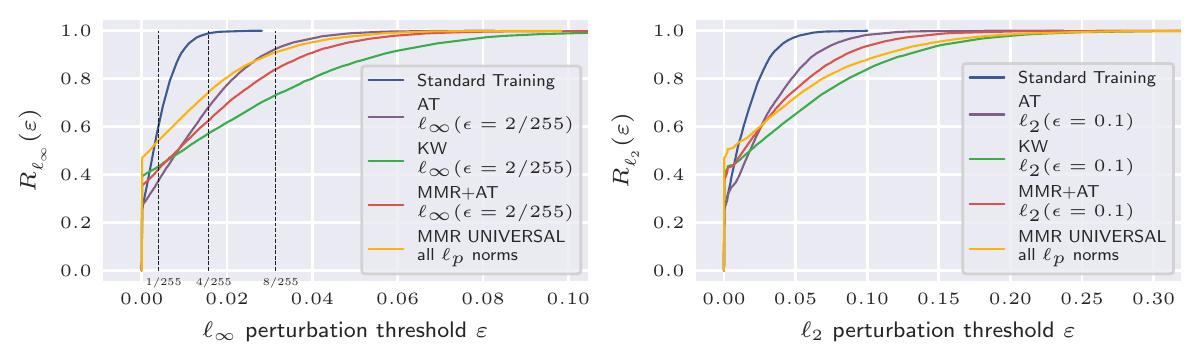}
    \caption{$\ell_\infty$ robustness curves (left plot) and $\ell_2$ robustness curves (right plot) resulting from different training methods (indicated by label), optimized for different threat models (indicated by
        label). The dashed vertical lines visualize the three point-wise measures from \cref{table:point wise measures}.
        The models are trained and evaluated on the full training-/test sets of \dscifar. The curves allow us to reliably compare the robustness of the classifiers, unbiased by choice of perturbation threshold.}
    \label{fig:rc_crossings}
\end{figure*}

\subsubsection{Overfitting to specific perturbation thresholds}
In addition to the problem of robustness curve intersection, relying on point-wise robustness measures to evaluate adversarial robustness is prone to overfitting when designing training procedures.
\Cref{fig:rc_sup_multiple_datasets} shows $\ell_\infty$ robustness curves for \tmma with $\ell_\infty$ threat model as provided by \textcite{Croce2018ProvableRobustness}.
The models trained on \dsmnist and \dsfmnist both show a change in slope, which could be a sign of overfitting to the specific threat models for which the classifiers were optimized for, since the change of slope occurs approximately at the chosen perturbation threshold $\varepsilon$.
This showcases a potential problem with the use of point-wise measures during training.
The binary separation of \enquote{small} and \enquote{large} perturbations based on the perturbation threshold is not sufficient to capture the intricacies of human perception under perturbations, but a simplification based on the idea that perturbations below the perturbation threshold should almost certainly not lead to a change in classification.
If a training procedure moves decision boundaries so that data points lie just beyond this threshold, it may achieve a low robust error, without furthering the actual goals of adversarial robustness research.
Using robustness curves for evaluation cannot prevent this effect, but can be used to detect it.

\begin{figure*}[tbp]
    \centering
    \includegraphics{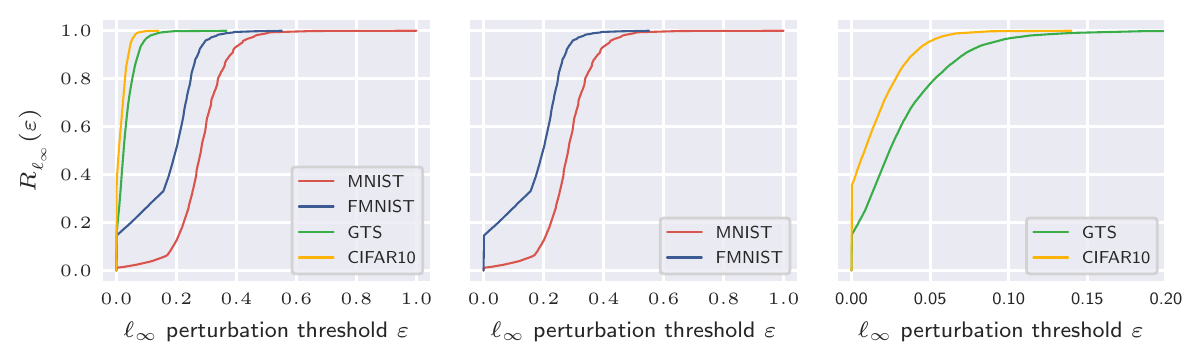}
    \caption{$\ell_\infty$ robustness curves for multiple data sets.
        Each curve
        is calculated for a different model and a different test data set.
        The data
        sets are indicated by the labels.
        The models are trained with \tmma, Threat
        Models: \dsmnist: $\ell_\infty(\varepsilon = 0.1)$, \dsfmnist:
        $\ell_\infty(\varepsilon = 0.1)$, \dsgts: $\ell_\infty(\varepsilon =
        4/255)$, \dscifar: $\ell_\infty(\varepsilon = 2/255)$. The curves for \dsmnist and \dsfmnist both show a change in slope, which can not be captured with point-wise measures and could be a sign of overfitting to the specific threat models for which the classifiers were optimized for. }
    \label{fig:rc_sup_multiple_datasets}
\end{figure*}

\subsubsection{Transfer of robustness across distance functions} \label{sec:robustness:transfer:threat}
In the following, we analyze to which extent properties of robustness curves transfer across different choices of distance functions.
If properties transfer, it may not be necessary to individually analyze robustness for each distance function.

In \cref{fig:rc_transfer} we compare the robustness of different models for the $\ell_\infty$ (left plot) and $\ell_2$ (right plot) distance functions.
The difference to \cref{fig:rc_crossings} is that the models (indicated by colour) are the same models in the left plot and in the right plot.
We find that for \tmma, the $\ell_\infty$ threat model leads to better robustness than the $\ell_2$ threat model \emph{both} for $\ell_\infty$ \emph{and} $\ell_2$ robustness curves.
In fact, \tmma with the $\ell_\infty$ threat model even leads to better $\ell_\infty$ and $\ell_2$ robustness curves than \tmmu, which is specifically designed to improve robustness for all $\ell_p$ norms.
Overall, the plots are visually similar.
However, since both plots contain multiple robustness curve intersections, the ranking of methods remains sensitive to the choice of perturbation threshold.
For example, a  perturbation threshold of $\varepsilon=3/255$ (vertical black dashed line) for the $\ell_\infty$ distance function (left subplot) shows that the classifier trained with \tmma ($\ell_2(\varepsilon=0.1)$) is approximately as robust as the classifier trained with \tmmu.
The same perturbation threshold for the $\ell_2$ distance function (right subplot) shows that the classifier trained with \tmma is more robust than the classifier trained with \tmmu for $\ell_2$ threat models.
Using typical perturbation thresholds from the literature for each distance function does not alleviate this issue:
At perturbation threshold $\varepsilon = 2 / 255$ for $\ell_\infty$ distance, the classifier trained with \tmma ($\ell_2(\varepsilon=0.1)$) is more robust than the one trained with \tmmu, while at perturbation threshold $\varepsilon = 0.1$ for $\ell_2$ distance, the opposite is true.
This shows that even when robustness curves across various distance functions are qualitatively similar, this may be obscured by the choice of threat model(s) to compare on.

We also emphasize that in general, robustness curves across various distance functions may be qualitatively \emph{dis}similar. In particular:
\begin{enumerate}[nosep]
    \item\label{item:robustness curve shape} For linear classifiers, the \emph{shape} of a robustness curve is identical for distances induced by different $\ell_p$ norms. This follows from \cref{thm:robustness curve shape} in \cref{app:robustness curves shape}, which is an extension of a weaker result in \textcite{Gopfert2020AdversarialRobustnessCurves}.
    For non-linear classifiers, different $\ell_p$ norms may induce different robustness curve shapes. See \textcite{Gopfert2020AdversarialRobustnessCurves} for an example.
    \item\label{item:robustness curve intersection} Even for linear classifiers, robustness curve \emph{intersections} do not transfer between distances induced by different $\ell_p$ norms. That is, for two linear classifiers, there may exist $p, p'$ such that the robustness curves for the $\ell_p$ distance intersect, but not the robustness curves for the $\ell_{p'}$ distance. See \cref{app:robustness curve intersections} for an example.
\end{enumerate}

\begin{figure*}[tbp]
    \centering
    \includegraphics{./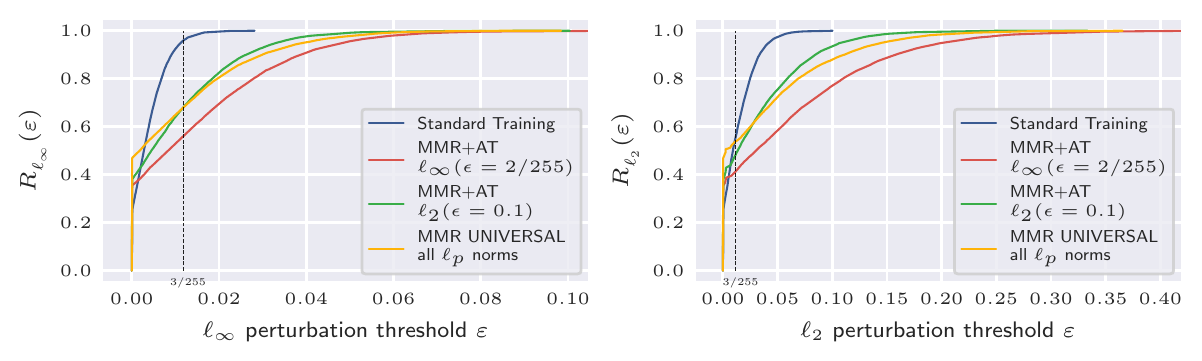}
    \caption{$\ell_\infty$ robustness curves (left plot) and $\ell_2$ robustness curves (right plot) resulting from different training methods (indicated by color and label), optimized for different threat models (indicated by
        label).
        The models are trained and evaluated on the full training-/test sets of \dscifar. The curves allow us to reliably compare the transfer of robustness of the classifiers across distance functions, unbiased by choice of threat model.}
    \label{fig:rc_transfer}
\end{figure*}

\subsection{On the relationship between scale and data}\label{sec:scale analysis for different data sets}
As the previous sections show, robustness curves can be used to reveal properties of robust models that may be obscured by point-wise measures.
However, some concept of scale, that is, some way to judge whether a perturbation is small or large, remains necessary.
\emph{Especially} when robustness curves intersect, it is crucial to be able to judge how critical it is for a model to be stable under the given perturbations.
For many pairs of distance function and data set, canonical perturbation thresholds have emerged in the literature, but to the best of our knowledge, no reasons for these choices are given.

Since the assumption behind adversarial examples is that small perturbations should not affect classification behavior, the question of scale cannot be answered independently of the data distribution.
In order to understand how to interpret different perturbation sizes, it can be helpful to understand how strongly the data point would need to be perturbed to \emph{actually} change the \emph{correct} classification.
We call this the \emph{inter-class distance} and analyze the distribution of inter-class distances for several popular data sets.

In \cref{fig:minimum_distances_3} we compare the inter-class distance distributions in $\ell_\infty$, $\ell_2$, and $\ell_1$ norm for all data sets considered in this work.
We observe that for the $\ell_1$ and $\ell_2$ norms, the shape of the curves is similar across data sets, but their extent is determined by the dimensionality of the data space.
In the $\ell_\infty$ norm, vastly different curves emerge for the different data sets.
We hypothesize that, because the inter-class distance distributions vary more strongly for $\ell_\infty$ distances than for $\ell_1$ distances, the results of robustifying a model \wrt $\ell_\infty$ distances may depend more strongly on the underlying data distribution than the results of robustifying \wrt $\ell_1$ distances.
This is an interesting avenue for future work.

When we look at the smallest inter-class distances in the $\ell_\infty$ norm (where all distances lie in the interval $[0, 1]$), we can make several observations.
Because the smallest inter-class distance for \dsmnist in the $\ell_\infty$ norm is around $0.9$, we can see that transforming an input from one class to one from a different class almost always requires completely flipping at least one pixel from almost-black to almost-white or vice versa.
For the other datasets, the inter-class distance distributions are more spread out than the inter-class distance distribution of \dsmnist.
We observe that for \dscifar with $\ell_\infty$ perturbations of size $\geq 0.25$, it becomes possible to transform samples from different classes into each other, so starting from this threshold, any classifier must necessarily trade off between accuracy and robustness.
The shapes of the curves and the threshold from which any classifier must necessarily trade of between accuracy and robustness differ strongly between data sets -- refer to \cref{table:minimum_distances} for exact values for the threshold.

\begin{figure*}[tbp]
    \centering
    \includegraphics{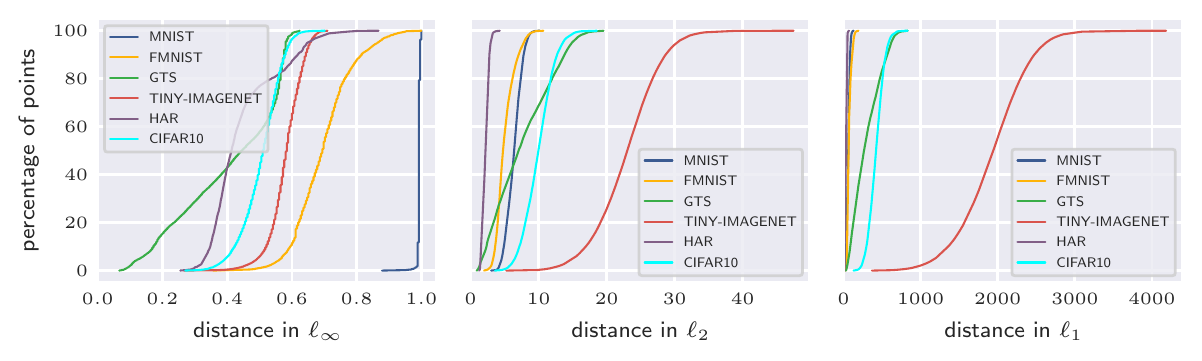}
    \caption{Minimum inter-class distances of all data sets considered in this
        work, measured in $\ell_\infty$ (left), $\ell_2$ (middle), and $\ell_1$
        (right) norm. See \cref{table:minimum_distances} for size and dimensionality. The shapes of the curves and the threshold from which any classifier must necessarily trade of between accuracy and robustness differ strongly between data sets.}
    \label{fig:minimum_distances_3}
\end{figure*}

\begin{table*}[bt]
    \rowcolors{2}{gray!12}{white}
    \centering
    \caption{Smallest and largest inter-class distances for subsets of several data
        sets, measured in $l_\infty$, $l_2$, and $l_1$ norm, together with basic
        contextual information about the data sets.
        All data has been been normalized to
        lie within the interval $[0,1]$, and duplicates and corrupted data points have
        been removed.
        Apart from \dshar, all data sets contain images -- the dimensionality
        reported specifies their sizes and number of channels.\newline}
    \resizebox{
        \ifdim\width>\linewidth
        \linewidth
        \else
        \width
        \fi
    }{!}{
        \begin{tabular}{lS[table-format=5.0]S[table-format=3.0]cS[table-format=1.2]S[table-format=2.2]S[table-format=3.2]S[table-format=2.2]S[table-format=2.2]S[table-format=4.2]}
            \toprule   
            & & & & \multicolumn{6}{c}{Inter-class Distance} \\
            & & & &  \multicolumn{3}{c}{{Smallest}} & \multicolumn{3}{c}{{Largest}} \\
            Dataset & {Samples} & {Classes} &  {Dimensionality} &  {$l_\infty$} & {$l_2$} & {$l_1$} & {$l_\infty$} & {$l_2$} & {$l_1$} \\
            \midrule
            \dsmnist    & 10000 &  10 & $28 \times 28 \times 1$ & 0.88 &  3.03 &  19.16 & 1.00 & 10.18 &  132.38 \\
            \dstinyimg  & 98139 & 200 & $64 \times 64 \times 3$ & 0.27 &  5.24 & 369.29 & 0.71 & 47.49 & 4184.37 \\
            \dsfmnist   & 10000 &  10 & $28 \times 28 \times 1$ & 0.36 &  2.00 &  24.87 & 1.00 & 10.70 &  194.29 \\
            \dsgts      & 10000 &  43 & $32 \times 32 \times 3$ & 0.07 &  0.90 &  31.46 & 0.62 & 19.54 &  833.22 \\
            \dscifar    & 10000 &  10 & $32 \times 32 \times 3$ & 0.27 &  3.61 & 130.77 & 0.70 & 18.57 &  831.44 \\
            \dshar      &  2947 &   6 &                   $561$ & 0.26 &  1.26 &  12.95 & 0.87 &  4.29 &   73.19 \\
            \bottomrule
        \end{tabular}
    }
    \label{table:minimum_distances}
\end{table*}

In \cref{table:minimum_distances}, we summarize the smallest and largest inter-class distances in different norms together with additional information about the size, number of classes, and dimensionality of the all the data sets we consider in this work.
The values correspond directly to \cref{fig:minimum_distances_3}, but even in this simplified view, we can quickly make out key differences between the data sets.
Compare, for example, \dsmnist and \dsgts: While it appears reasonable to expect $\ell_\infty$ robustness of $0.3$ for \dsmnist, the same threshold for \dsgts is not possible.
Relating \cref{table:minimum_distances} and \cref{fig:rc_sup_multiple_datasets}, we find entirely plausible the strong robustness results for \dsmnist, and the small perturbation threshold for \dsgts.
Based on inter-class distances we also expect less $\ell_\infty$ robustness for \dscifar than for \dsfmnist, but not as seen in \cref{fig:rc_sup_multiple_datasets}.
In any case, it is safe to say that, when judging the robustness of a model by a certain threshold, that number must be set with respect to the distribution the model operates on.

Overall, the strong dependence of robustness curves on the data set and the chosen norm, emphasizes the necessity of informed and conscious decisions regarding robustness thresholds.
We provide an easily accessible reference in the form of \cref{table:minimum_distances}, that should prove useful while judging scales in a threat model.

\section{Discussion} \label{sec: discussion}
We have demonstrated that comparisons of robustness of different classifiers using point-wise measures can be heavily biased by the choice of perturbation threshold and distance function of the threat model,
and that conclusions about rankings of classifiers with regards to their robustness based on point-wise measures therefore only provide a narrow view of the actual robustness behavior of the classifiers.
Further, we have demonstrated different ways of using robustness curves to overcome the shortcomings of point-wise measures, and therefore recommend using them as the standard tool for comparing the robustness of classifiers.
Finally, we have demonstrated how suitable perturbation thresholds necessarily depend on the data they pertain to.

It is our hope that practitioners and researchers alike will use the methodology proposed in this work, especially when developing and comparing adversarial defenses, and carefully motivate any concrete threat models they might choose, taking into account all available context.

\emph{Limitations:}
Computing approximate robustness curves for state-of-the-art classifiers and large data sets is computationally very intensive, due to the need of computing approximate minimal adversarial perturbations with strong adversarial attacks.
Developing adversarial attacks which are both strong and fast is an ongoing challenge in the field of adversarial robustness.
Another limitation of our work is the focus on a small group of distance functions (mainly $\ell_\infty$ and $\ell_2$ norms).
Even though it does intuitively make sense that models should at least be robust against these types of perturbations, a more general evaluation able to consider more distance functions simultaneously could be advantageous.

\AtNextBibliography{\raggedright\small}
\printbibliography

\appendix
\setcounter{theorem}{0}
\setcounter{proposition}{0}
\section{Robustness curves with arbitrary intersections}\label{app:robustness curve intersections}

\begin{theorem}
    Let $T_1, T_2 \subset \R^{>0}$ be two disjoint finite sets.
    Then there exists a distribution $\P$ on $\R \times \{0,1\}$ and two classifiers $c_1, c_2 : \R\to \{0,1\}$ such that $\rob{|\cdot|}{c_1}(t) < \rob{|\cdot|}{c_2}(t)$ for all $t \in T_1$ and $\rob{|\cdot|}{c_1}(t) > \rob{|\cdot|}{c_2}(t)$ for all $t \in T_2$.
\end{theorem}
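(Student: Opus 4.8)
The plan is to reduce the statement to a bookkeeping exercise, exploiting the fact that on $\R$ the robustness curve of a finitely supported distribution against a piecewise-constant classifier is simply a non-decreasing step function whose jumps we can place at essentially arbitrary radii. Concretely, if $\P$ is supported on widely separated points $x_1 \ll x_2 \ll \cdots \ll x_n$, each carrying label $0$ and mass $1/n$, and a classifier outputs $1$ only on one short "flip" interval near each $x_j$, then the nearest point of the opposite label to $x_j$ is exactly that flip interval, so $x_j$ becomes vulnerable precisely at the radius equal to its distance to its own flip interval. The two classifiers $c_1,c_2$ will share the same $\P$ and differ only in where these flip intervals sit.

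First I would sort $T_1 \cup T_2 = \{t_1 < t_2 < \cdots < t_n\}$ (possible since both sets are finite) and fix $\delta > 0$ smaller than $t_1$ and smaller than every gap $t_{j+1}-t_j$. I would then place the $x_j$ with consecutive gaps exceeding, say, $10\,t_n$, so that distinct flip intervals cannot interfere. For $c_1$, the flip interval attached to $x_j$ is put at distance $t_j + \delta$ from $x_j$ if $t_j \in T_1$, and at distance $t_j - \delta$ if $t_j \in T_2$; for $c_2$ the two cases are swapped. Writing $d_j^{(k)} \in \{t_j-\delta,\,t_j+\delta\}$ for the distance of $x_j$ to $c_k^{-1}(1)$, one reads off that $d_{j'}^{(k)} < t_j$ for $j' < j$ and $d_{j'}^{(k)} > t_j$ for $j' > j$, so
\[
\rob{|\cdot|}{c_k}(t_j) \;=\; \tfrac{j-1}{n} \;+\; \tfrac1n\,\1\!\left[d_j^{(k)} \le t_j\right],
\]
and $d_j^{(k)} \le t_j$ holds exactly when $d_j^{(k)} = t_j - \delta$. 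By the choice of flip intervals this yields $\rob{|\cdot|}{c_1}(t_j) = \tfrac{j-1}{n} < \tfrac{j}{n} = \rob{|\cdot|}{c_2}(t_j)$ when $t_j \in T_1$, and the reverse strict inequality when $t_j \in T_2$ — exactly the required alternation. Note that disjointness of $T_1$ and $T_2$ is precisely what makes these demands consistent.

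I do not expect a genuine obstacle; the only points needing care are routine. One is verifying the non-interference claim — that $\mathrm{dist}(x_j, c_k^{-1}(1))$ is attained exactly at $x_j$'s own flip interval and nowhere closer — which is why the $x_j$ are taken so far apart, the flip intervals short, and all on one fixed side so the relevant infimum is attained at an endpoint. The other is the usual open/closed subtlety in \cref{Definition:robustness_curves} at the exact query radius; we sidestep it by keeping every activation radius at $t_j \pm \delta$, never equal to a queried $t_j$. Since the classifiers are non-constant, every data point is eventually vulnerable, so $\rob{|\cdot|}{c_k}(\infty)=1$ and no adjustment of the total mass is needed; $R^{c_k}_{|\cdot|}(0)=0$ as well, so all data points are correctly classified. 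This makes $\P$, $c_1$, $c_2$ a valid witness and completes the argument.
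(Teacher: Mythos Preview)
Your argument is correct and takes a genuinely different route from the paper's. The paper fixes two single-threshold classifiers $c_1(x)=\1_{x\ge -d}$ and $c_2(x)=\1_{x\ge d}$ and then places the atoms of $\P$ at carefully chosen distances on either side of $-d$ and $d$; to make this work it first invokes a ``without loss of generality'' reduction to the case where $T_1$ and $T_2$ have equal cardinality and interleave perfectly in the sorted union. You invert the roles: the data are fixed (uniform on widely separated points, all labelled $0$) and the alternation is engineered locally in the classifiers via one short flip interval per data point, whose offset $t_j\pm\delta$ encodes membership in $T_1$ or $T_2$. Your construction therefore handles arbitrary disjoint finite $T_1,T_2$ directly, with no preprocessing step; the price is that your classifiers have $O(|T_1|+|T_2|)$ pieces rather than a single jump. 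Both proofs ultimately exploit the same freedom, namely that on $\R$ the robustness curve of a finitely supported distribution against a piecewise-constant classifier is a step function whose jump radii can be prescribed at will, and your verification of the non-interference and boundary-case issues is adequate.
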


\begin{proof}
    Without loss of generality, assume that $T_1 = \{t_1, \dots, t_n\}$ and $T_2 = \{t_1', \dots, t_n'\}$ with $t_i < t_i' < t_{i+1}$ for $i \in \{1,\dots, n\}$.
    We will construct $c_1, c_2$ such that the robustness curves $\rob{|\cdot|}{c_1}(\cdot), \rob{|\cdot|}{c_2}(\cdot)$ intersect at exactly the points $(t_i + t_i')/2$ and $(t_i + t_{i+1}')/2$ on the interval $(t_1, t_n']$. 
    Let $d = t_n'$ and
    \begin{equation*}
        \P\left( -d - \frac{t_i + t_{i+1}'}{2}, 0 \right) = \P\left(d + \frac{t_i + t_i'}{2}, 1 \right) = \frac{2}{4n+1}
    \end{equation*}
    and
    \begin{equation*}
        P\left(-d-\frac{t_1}{2}, 0\right) = \frac{1}{4n+1}\,.
    \end{equation*}
    Let $c_1(x) = \1_{x \geq -d}$ and $c_2(x) = \1_{x \geq d}$.
    Both classifiers have perfect accuracy on $\P$, meaning that $\rob{|\cdot|}{c_i}(0) = 0$.
    The closest point to the decision boundary of $c_1$ is $-d-\frac{t_1}{2}$ with weight $\frac{1}{4n+1}$, so $\rob{|\cdot|}{c_1}(\frac{t_1}{2}) = \frac{1}{4n+1}$.
    The second-closest point is $-d - \frac{t_1 + t_{2}'}{2}$ with weight $\frac{2}{4n+1}$, so $\rob{|\cdot|}{c_1}(\frac{t_1 + t_2'}{2}) = \frac{3}{4n+1}$, and so on.
    Meanwhile, the closest point to the decision boundary of $c_2$ is $d+\frac{t_1 + t_1'}{2}$ with weight $\frac{2}{4n+1}$, so $\rob{|\cdot|}{c_2}(\frac{t_1+t_1'}{2}) = \frac{2}{4n+1}$, the second-closest point is $d  \frac{t_2 + t_{2}'}{2}$ with weight $\frac{2}{4n+1}$, so $\rob{|\cdot|}{c_2}(\frac{t_2 + t_2'}{2}) = \frac{4}{4n+1}$, and so on. 
\end{proof}

\begin{example}
    To see that robustness curve intersections do not transfer between different $\ell_p$ norms, consider the example in \cref{fig:toy_intersect}.
    The blue and orange linear classifiers both perfectly separate the displayed data. The $\ell_\infty$ robustness curves of the classifiers do not intersect, meaning that the robust error of the blue classifier is always better than that of the orange classifier.
    In $\ell_2$ distance, the robustness curves intersect, so that there is a range of perturbation sizes where the orange classifier has better robust error than the blue classifier.
    \begin{figure*}[tbp]
    \centering
\includegraphics{./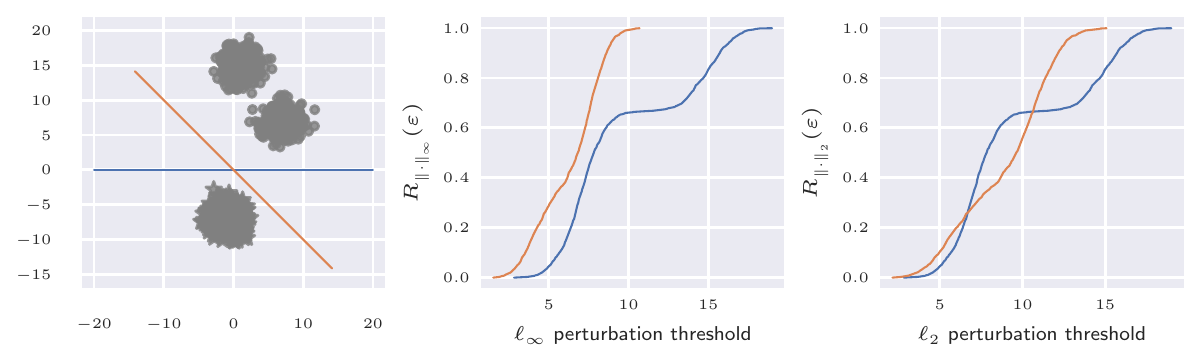}
        \caption{Example of a data distribution and two linear classifiers such that the $\ell_2$ robustness curves intersect, but not the $\ell_\infty$ robustness curves.}
        \label{fig:toy_intersect}
    \end{figure*}
\end{example}

\section{Robustness curve dependence of shape on distance function}\label{app:robustness curves shape}

\begin{theorem}\label{thm:robustness curve shape}
    Let $f(x) = \sgn(w^Tx+b)$ be a linear classifier.
    Then the \emph{shape} of the robustness curve for $f$ regarding an $\ell_p$ norm-induced distance does not depend on the choice of $p$.
    It holds that
    \begin{equation}
    \rob{\ell_{p_1}}{f}(\varepsilon) = \rob{\ell_{p_2}}{f}(c \cdot \varepsilon) \quad \forall \; \varepsilon \text{ for } c = \frac{ \|w\| _ { q_1 } }{ \|w\| _ {
            q_2 } }, q_i = \frac{p_i}{p_i - 1}.
    \end{equation}
\end{theorem}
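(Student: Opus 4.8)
\section*{Proof proposal}

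The plan is to reduce the statement to the classical formula for the $\ell_p$-distance from a point to a hyperplane. For the linear classifier $f(x) = \sgn(w^Tx+b)$ on $\R^{\ddim}$ (with $w \neq 0$, so that $f$ is well defined), the decision boundary is the hyperplane $H = \{x : w^Tx + b = 0\}$, and a correctly classified point $x$ admits an adversarial perturbation of $\ell_p$-size at most $\varepsilon$ precisely when $x$ lies within $\ell_p$-distance $\varepsilon$ of $H$. So the first step is to establish that, with $q = p/(p-1)$ the Hölder conjugate of $p$,
\begin{equation*}
\min_{x' \in H} \norm{x - x'}_p = \frac{|w^Tx + b|}{\norm{w}_q} .
\end{equation*}
This is a direct consequence of Hölder's inequality: writing $x' = x + \delta$, membership $x' \in H$ is the linear constraint $w^T\delta = -(w^Tx+b)$, and $|w^T\delta| \leq \norm{w}_q \norm{\delta}_p$ gives the lower bound on $\norm{\delta}_p$, while the bound is attained by an explicit $\delta$ built from the sign pattern of $w$ (with the usual care for the endpoint cases $p = 1$, $q = \infty$, where the minimizer concentrates on a largest-magnitude coordinate of $w$, and $p = \infty$, $q = 1$, where it spreads according to $\sgn(w_i)$; in finite dimensions the minimum is always attained).

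Next I would put the robustness curve into closed form. Splitting the event in \cref{Definition:robustness_curves} according to whether $x$ is already misclassified, and using the distance formula on the correctly classified points, gives
\begin{equation*}
\rob{\ell_p}{f}(\varepsilon) = \P\big(f(x) \neq y\big) + \P\big(f(x) = y \ \wedge\ |w^Tx+b| \leq \varepsilon\,\norm{w}_q\big) .
\end{equation*}
The point of this rewriting is that the first term does not depend on $p$ at all --- it is just the ordinary (unperturbed) error of $f$ --- and the second term depends on $p$ only through the single positive scalar $\norm{w}_q$ multiplying $\varepsilon$ inside the probability.

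Finally, comparing $p_1$ and $p_2$ with conjugates $q_1, q_2$, set $c = \norm{w}_{q_1}/\norm{w}_{q_2}$ (well defined and positive since $w \neq 0$). Substituting $c\varepsilon$ for $\varepsilon$ in the formula for $\rob{\ell_{p_2}}{f}$ turns its threshold $c\varepsilon\,\norm{w}_{q_2}$ into $\varepsilon\,\norm{w}_{q_1}$, which is exactly the threshold appearing in $\rob{\ell_{p_1}}{f}(\varepsilon)$, and the misclassification terms coincide trivially. Hence $\rob{\ell_{p_1}}{f}(\varepsilon) = \rob{\ell_{p_2}}{f}(c\varepsilon)$ for every $\varepsilon$, so the two curves are horizontal rescalings of one another and in particular have the same \emph{shape}. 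The only genuinely delicate point is the distance formula itself --- specifically achievability of the minimum in the endpoint norms, and ruling out the degenerate case $w = 0$ --- while everything downstream is bookkeeping; this also makes transparent in what sense the statement extends the weaker one in \textcite{Gopfert2020AdversarialRobustnessCurves}, by identifying the exact rescaling constant $c$.
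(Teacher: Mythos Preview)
Your proposal is correct and follows essentially the same route as the paper: establish the $\ell_p$-distance-to-hyperplane formula via H\"older (the paper isolates this as a separate lemma, with the same explicit minimizers and endpoint cases you mention), then split the robustness event into the misclassified set and the correctly-classified-but-close set, observe that only the latter depends on $p$ and does so solely through the factor $\norm{w}_q$, and conclude by matching thresholds. The paper's writeup phrases the second step as a set equality $B_{p_1}(\varepsilon) = B_{p_2}\!\left(\tfrac{\norm{w}_{q_1}}{\norm{w}_{q_2}}\varepsilon\right)$ rather than your closed-form probability expression, but the argument is the same.
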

\begin{lemma}\label{lemma:decision boundary distance}
    Let $x \in \R^\ddim$ with $w^Tx + b \neq 0$. Let $p \in [1, \infty]$ and $q$
    such that $\frac{1}{p} + \frac{1}{q} = 1$, where we take $\frac{1}{\infty} =
    0$. Then
    \begin{equation}
        \min \{\|\delta\|_p: \sgn(w^T(x+\delta) + b) \neq \sgn(w^Tx + b) \} = \frac{|w^T + b|}{\|w\|_q}
    \end{equation}
    and the minimum is attained by 
    \begin{equation}
        \delta = 
        \begin{cases} 
        \frac{-w^Tx - b}{\|w\|_\infty} \sgn(w_j) e_j, j = \argmax_i |w_i| & p = 1 \\
        \frac{-w^Tx - b}{\|w\|_q^q}(\sgn(w_i) |w_i|^{\frac{1}{p-1}})_{i=1}^d & p \in (1, \infty]\,.
        \end{cases}\label{eq:defining delta}
    \end{equation}
    where $x^\frac{1}{\infty - 1} = x^0 = 1$ and $e_j$ is the $j$-th unit vector.
\end{lemma}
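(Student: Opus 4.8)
The plan is to recognize Lemma~\ref{lemma:decision boundary distance} as the standard computation of the $\ell_p$-distance from a point to a hyperplane, with the minimizer given by the equality case of H\"older's inequality. Write $t := -(w^Tx+b)$, which is nonzero by hypothesis (and note $w \neq 0$ is implicit, so $\|w\|_q > 0$). First I would reduce the minimization over the sign-flip set $S := \{\delta : \sgn(w^T(x+\delta)+b) \neq \sgn(w^Tx+b)\}$ to a cleaner statement about an affine hyperplane. On the one hand, if $\delta \in S$ then $w^T(x+\delta)+b$ is either zero or of opposite sign to $w^Tx+b$, which in both cases forces $|w^T\delta| \geq |w^Tx+b| = |t|$. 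On the other hand, the hyperplane $H := \{\delta : w^T\delta = t\}$ is contained in $S$, since any such $\delta$ gives $w^T(x+\delta)+b = 0$ and $\sgn(0) \neq \sgn(w^Tx+b)$. Hence it suffices to (i) prove the lower bound $\|\delta\|_p \geq |w^Tx+b|/\|w\|_q$ for all $\delta \in S$, and (ii) exhibit one $\delta^\ast \in H \subseteq S$ attaining it; the infimum over $S$ is then a minimum with the claimed value.

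For (i) I would simply invoke H\"older's inequality: for $\delta \in S$ we have $|t| \leq |w^T\delta| \leq \|w\|_q\,\|\delta\|_p$, hence $\|\delta\|_p \geq |t|/\|w\|_q = |w^Tx+b|/\|w\|_q$.

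For (ii) I would take the $\delta$ displayed in \eqref{eq:defining delta} and verify two things by direct computation, using the conjugate-exponent identities $\tfrac{1}{p-1} = \tfrac{q}{p}$ and $q(p-1) = p$ (equivalently $1 + \tfrac{q}{p} = q$). First, that $w^T\delta = t$, so $\delta \in H \subseteq S$: for $p \in (1,\infty)$ this is $w^T\delta = \tfrac{t}{\|w\|_q^q}\sum_i |w_i|^{1+q/p} = \tfrac{t}{\|w\|_q^q}\sum_i |w_i|^q = t$. Second, that $\|\delta\|_p = |t|/\|w\|_q$, via $\|\delta\|_p^p = \tfrac{|t|^p}{\|w\|_q^{qp}}\sum_i |w_i|^q = \tfrac{|t|^p}{\|w\|_q^{q(p-1)}} = \tfrac{|t|^p}{\|w\|_q^p}$. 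The endpoint cases $p=1$ and $p=\infty$ I would treat separately, as they are exactly the degenerate equality cases of H\"older (all the perturbation mass placed on a coordinate of maximal magnitude $|w_j| = \|w\|_\infty$, respectively a ``flat'' perturbation whose sign pattern matches that of $w$); each agrees with the limiting form of the general formula under the convention $|w_i|^0 := 1$, and one checks directly $w^T\delta = t$ and $\|\delta\|_p = |t|/\|w\|_q$.

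The only genuinely delicate point — and the step I would be most careful with — is the reduction in the first paragraph: the lower bound must be asserted on the \emph{full} sign-flip set $S$, not merely on the hyperplane $H$, and one must separately check $H \subseteq S$ to guarantee the minimum is actually attained (and lies where claimed). Everything after that is routine bookkeeping with conjugate exponents, plus the harmless handling of coordinates with $w_i = 0$ in the $p \in \{1,\infty\}$ cases.
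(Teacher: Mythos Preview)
Your proposal is correct and follows essentially the same route as the paper: a H\"older lower bound over the full sign-flip set, combined with a direct verification that the displayed $\delta$ lands on the hyperplane $w^T(x+\delta)+b=0$ and has $\|\delta\|_p = |w^Tx+b|/\|w\|_q$. If anything, you are more explicit than the paper about the reduction $H \subseteq S$ and about the conjugate-exponent bookkeeping, which the paper compresses into ``it is easy to check''.
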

\begin{proof}[Proof of \cref{thm:robustness curve shape}]
    By Hölder's inequality, for any $\delta$, 
    \begin{equation}
    \sum_{i=1}^\ddim |w_i \delta_i| \leq \|\delta\|_p \|w\|_q \,.
    \end{equation}
    For $\delta$ such that $\sgn(w^T(x+\delta) + b) \neq \sgn(w^Tx + b)$ it
    follows that
    \begin{equation}
    \|\delta\|_p \geq \frac{\sum_{i=1}^\ddim |w_i \delta_i|}{\|w\|_q} \geq \frac{|\sum_{i=1}^\ddim w_i \delta_i|}{\|w\|_q} \geq \frac{|w^Tx + b|}{\|w\|^q} \,. \label{eq:Hölder bound}
    \end{equation}

    Using the identity $q = \frac{p}{p-1}$, it is easy to check that for every
    $p \in [1, \infty]$, with $\delta$ as defined in \cref{eq:defining delta},
    \begin{enumerate}
        \item\label{item:feasible point} $w^T \delta = -w^Tx - b$, so that
        $w^T(x + \delta) + b = 0$, and
        \item\label{item:minimal norm} $\|\delta\|_p = \frac{|w^Tx+b|}{\|w\|_q}$.
    \end{enumerate}
    \Cref{item:feasible point} shows that $\delta$ is a feasible point, while
    \cref{item:minimal norm} in combination with \cref{eq:Hölder bound} shows
    that $\|\delta\|_p$ is minimal.
\end{proof}

Using \cref{lemma:decision boundary distance}, we are ready to prove
\cref{thm:robustness curve shape}.
\begin{proof}
By definition, 
\begin{equation}
    \rob{\ell_{p_1}}{f}(\varepsilon) = \P(\underbrace{\{(x,y) \text{ s.t. } \exists\; \delta: \|\delta\|_{p_1} \leq \varepsilon \wedge f(x+\delta) \neq y\}}_{\rr_{p_1}(\varepsilon)})\,.
\end{equation}
We can split $\rr_{p_1}(\varepsilon)$ into the disjoint sets
\begin{gather}
    \underbrace{\{(x,y): f(x) \neq y\}}_{=M} \\
     \dot\cup \\
     \underbrace{\{(x,y) \;\text{s.t.}\;\exists\; \delta: \|\delta\|_{p_1} \leq \varepsilon \wedge y = f(x) \neq f(x+\delta)\}}_{=B_{p_1}(\varepsilon)}\,.
\end{gather}
Choose $q_1, q_2$ such that $\frac{1}{p_i} + \frac{1}{q_i} = 1$. By
\cref{lemma:decision boundary distance}, and using that $f(x) = \sgn(w^Tx + b)$,
\begin{align}
    B_{p_1}(\varepsilon) &= \{(x,y): \sgn(w^Tx+b) = y \wedge \frac{|w^Tx+b|}{\|w\|_{q_1}} \leq \varepsilon\} \\
    & = \{(x,y): \sgn(w^Tx+b) = y \wedge \frac{|w^Tx+b|}{\|w\|_{q_2}} \leq \frac{\|w\|_{q_1}}{\|w\|_{q_2}}\varepsilon\}) \\
    &= B_{p_2}\left(\frac{\|w\|_{q_1}}{\|w\|_{q_2}}\varepsilon\right)\,.
\end{align}
This shows that
\begin{align}
    \rob{\ell_{p_1}}{f}(\varepsilon) &= \P(M) + \P(B_{p_1}(\varepsilon)) \\
    &= \P(M) + \P\left(B_{p_2}\left(\frac{\|w\|_{q_1}}{\|w\|_{q_2}}\varepsilon\right)\right) \\
    &= \rob{\ell_{p_2}}{f}\left(\frac{\|w\|_{q_1}}{\|w\|_{q_2}}\varepsilon\right)\,.
\end{align}
\end{proof}
\section{Experimental details}\label{app:experimental details}
\subsection{Model training}
We use the same model architecture as \textcite{Croce2018ProvableRobustness} and
\textcite{wong2017provable}. Unless explicitly stated otherwise, the trained
models are taken from \textcite{Croce2018ProvableRobustness}. The exact
architecture of the model is: Convolutional layer (number of filters: 16, size:
4x4, stride: 2), ReLu activation function, convolutional layer (number of
filters: 32, size: 4x4, stride: 2), ReLu activation function, fully connected
layer (number of units: 100), ReLu activation function, output layer (number of
units depends on the number of classes). All models are trained with Adam
Optimizer \autocite{kingma2014adam} for 100 epochs, with batch size 128 and a
default learning rate of 0.001. More information on the training can be found in
the experimental details section of the appendix of
\textcite{Croce2018ProvableRobustness}.
The trained models are those made publicly available by \textcite{Croce2018ProvableRobustness}%
\footnote{\raggedright The models trained with \tmst, \tmkw, \tmat and \tmma are avaible at \url{www.github.com/max-andr/provable-robustness-max-linear-regions}.}%
and \textcite{Croce2020Provable}%
\footnote{\raggedright The models trained with \tmmu are avaible at \url{www.github.com/fra31/mmr-universal}.}%
.
\subsection{Approximated robustness curves}\label{app:adversarial generation details}
We use state-of-the-art adversarial attacks to approximate the true minimal
distances of input datapoints to the decision boundary of a classifier for our
adversarial robustness curves (see \cref{Definition:robustness_curves}). We base
our selection of attacks on the recommendations of
\textcite{carlini2019evaluating}. Specifically, we use the following attacks:
For $\ell_2$ robustness curves we use the $\ell_2$-attack proposed by \textcite{Carlini2017Towards} and for $\ell_\infty$ robustness curves we use PGD~\autocite{madry2018towards}.
For both attacks, we use the implementations of Foolbox \autocite{rauber2017foolbox}. For the
$\ell_\infty$ attack, we use the standard hyperparameters of the Foolbox
implementation. For the $\ell_1$ and $\ell_2$ attacks we increase the number of
binary search steps that are used to find the optimal tradeoff-constant between
distance and confidence from 5 to 10, which we found empirically to improve the
results. For the rest of the hyperparameters, we again use the standard values
of the Foolbox implementation.
\subsection{Computational architecture}
We executed all programs on an architecture with 2 x Intel Xeon(R) CPU E5-2640
v4 @ 2.4 GHz, 2 x Nvidia GeForce GTX 1080 TI 12G and 128 GB RAM.

\end{document}